\newcommand\numberthis{\addtocounter{equation}{1}\tag{\theequation}}
\title{Decision-Focused Evaluation: Analyzing Performance of Deployed Restless Multi-Arm Bandits}
\author {
    Paritosh Verma\textsuperscript{\rm 1,2}\thanks{Work was done during an internship at Google Research India.},
    Shresth Verma \textsuperscript{\rm 1},
    Aditya Mate \textsuperscript{\rm 1,3}$^*$,
    Aparna Taneja \textsuperscript{\rm 1},
    Milind Tambe \textsuperscript{\rm 1}
}
\begin{document}

\maketitle

\begin{abstract}




Restless multi-arm bandits (RMABs) is a popular decision-theoretic framework that has been used to model real-world sequential decision making problems in public health, wildlife conservation, communication systems, and beyond. Deployed RMAB systems typically operate in two stages: the first predicts the unknown parameters defining the RMAB instance, and the second employs an optimization algorithm to solve the constructed RMAB instance.

In this work we provide and analyze the results from a first-of-its-kind deployment of an RMAB system in public health domain, aimed at improving maternal and child health. Our analysis is focused towards understanding the relationship between prediction accuracy and overall performance of deployed RMAB systems. This is crucial for determining the value of investing in improving predictive accuracy towards improving the final system performance, and is useful for diagnosing, monitoring deployed RMAB systems. 

Using real-world data from our deployed RMAB system, we demonstrate that an improvement in overall prediction accuracy may even be accompanied by a degradation in the performance of RMAB system -- a broad investment of resources to improve overall prediction accuracy may not yield expected results. Following this, we develop decision-focused evaluation metrics to evaluate the predictive component and show that it is better at explaining (both empirically and theoretically) the overall performance of a deployed RMAB system.

\end{abstract}


\section{Introduction}  

Restless Multi Armed Bandits (RMABs) is a general framework for solving sequential decision making problems and has been employed in a wide variety of application domains such as planning preventive interventions for healthcare ~\cite{mate2022field}, anti-poaching patrols \cite{QianRestlessPoachers}, communication systems~\cite{Liu2010,liu2012learning}, sensor monitoring tasks~\cite{glazebrook2006indexable}, etc.
Most of the works on RMAB have focused on studying the optimization problem of allocating limited resources, assuming perfect knowledge of the underlying parameters of the RMAB model \cite{bertsimas2000restless,nino2001restless,verloop2016asymptotically,mate2020collapsing}. As a result, RMABs have seen limited deployment in practice, especially in applications such as healthcare and conservation where the RMAB parameters of the agents being catered to, are unknown in the real-world. 
Existing approaches that implement RMAB solutions to resource allocation problems typically adopt a two-staged, predict-then-optimize framework \cite{wang2020restless,osband2013more,jung2019regret,mate2022field}. In the first stage, these approaches learn a machine learning model that predicts the necessary RMAB parameters and then in the second stage, solve the RMAB optimization problem using these predictions. 

We pose the question of understanding the relationship between prediction accuracy and overall RMAB system performance. Such an understanding is important for two reasons. First, we want to understand if investing in improved prediction accuracy warrants improved system performance. Second, once an RMAB-based system is deployed in the real-world, we are interested in monitoring its performance and providing diagnosis to understand the potential sources of improvements in the data-to-deployment pipeline.  In general, we expect that if the RMAB model parameters are accurately predicted, the system's decisions are indeed guaranteed to be optimal; similarly, we expect that given a fixed optimization engine, higher overall prediction accuracy would lead to improved RMAB performance.


\paragraph{Our contributions.} \emph{Our first contribution is to show, using for the first time an RMAB system deployed in the public health domain, that improving machine learning prediction accuracy alone -- particularly measured using standard measures of error like RMSE or MAE -- may not lead to improved overall system performance. In fact there may be a degradation in performance.} Evaluating the performance of RMABs in context of a real-world deployment (described below), we demonstrate this important phenomenon, and as our second contribution propose an alternative, decision-focused evaluation approach of the machine learning component to address this issue. Broadly speaking, a key take away lesson of our work is that instead of investing resources including compute, data or human resources for broad improvements in prediction accuracy, our proposed decision-focused evaluation metrics may provide a better guide for investment in RMAB deployment. Moreover, many systems deployed in the real-world follow a two-staged, predict-then-optimize framework \cite{ford2015beware,fang2016deploying,perrault2019ai}, our work highlights that even in these domains we shouldn't directly assume, or design systems based on a correlation between prediction accuracy and overall performance. 

We demonstrate our results and describe the methodological contributions via one such application of the predict-then-optimize framework for RMABs. 
We collaborate with ARMMAN, an Indian NGO, that aims to improve access to maternal health information in underprivileged communities. Through their flagship program mMitra, ARMMAN delivers critical health information to new and expectant mothers via automated phone calls. However, the engagement rates among mMitra's beneficiaries dwindle over time and as a fix, ARMMAN delivers live service calls to encourage engagement. Due to limited resources, only a small fraction of beneficiaries can be selected for live service calls every week. This is cast as an RMAB problem where we must decide which beneficiaries to choose every week for live service calls. 

Using our proposed decision-focused evaluation approach for RMABs, we analyze the performance of RMAB-based system deployed for ARMMAN, i.e., all our analysis is based on real-world data where decisions concerning real individuals were taken via an RMAB. The deployed RMAB system employs the Whittle index based method ---  which is the most prominent solution concept for RMABs. We compare and contrast different methods of defining errors in top-k Whittle indices. As our final contribution in this paper, we show that an error definition based on the Spearman's footrule measure \cite{diaconis1977spearman}\footnote{Spearman's footrule is used to quantify the disarray between two permutations} is best suited in this context for decision-focused evaluation. Our proposed approach is indeed able to predict the real-world performance of our RMAB system better than prediction accuracy analysis.

\section{Related Work}
Sequential resource allocation problems arise in many real-world scenarios in healthcare domain.
For example, adherence monitoring is an extensively studied problem \cite{martin2005challenge} where the goal is to carefully allocate the limited number of available healthcare workers or resources to monitor and improve patients' adherence to medication for diseases like cardiac problems~\cite{corotto2013heart}, tuberculosis~\cite{Killian_2019, ong2014effects,chang2013house} and HIV~\cite{HIV}. These mentioned works largely focus on developing machine learning model to classify beneficiaries as high risk, or predict their future adherence patterns. However, these approaches essentially rely on making one-shot predictions and fail to capture the sequential aspect the of decision making needed to maximize long term rewards. Other works have also used reinforcement learning to design health monitors and provide personalized suggestions and notifications to users~\cite{liao2020personalized, pollack2002pearl}; notably, these works do not deal with the problem of allocating limited resources, as there is no constraint on the number of notifications being sent.

Restless Multi-Armed Bandit is a popular framework \cite{whittle-rbs,jung2019regret} used for solving sequential resource allocation problems that require long term planning. 
In an RMAB instance the decision choices/alternatives are represented by Markov Decision Processes (MDPs) which are in turn characterized by their transition dynamics.
One major challenge in using RMABs in the real-world is the problem of unknown transition dynamics. Several previous works such as  \cite{LiuIndexRB,QianRestlessPoachers,mate2021risk} assume that transition dynamics are already known beforehand, making them unsuitable for real-world deployment.



A common approach is to estimate the transition dynamics by inferring them using background information.
However, the predictive model learnt in such works maximizes the accuracy of predicting transition dynamics. This can create a mismatch between the objective being maximized and the final decision outcome. Such a mismatch can result in unintended consequences for the system, as highlighted in \cite{Boettiger_2022} in the context of fisheries management.

In contrast, decision-focused learning is a line of work wherein the decision outcomes are directly optimized rather than following a predict-then-optimize framework. Different kinds of one-shot \cite{donti2017task,perrault2020end,wilder2019melding} and sequential optimization problems ~\cite{wang2021learning,futoma2020popcorn} can be solved end-to-end by blending the decision outcome into the downstream optimization problem. In \cite{wang2022decision}, a decision focused learning framework is proposed for RMABs, where transition dynamics are learnt by directly optimizing the final decision outcome using off-policy policy evaluation (OPE).  Unfortunately, OPE with limited data is often not very stable and requires extensive tuning to get desired results \cite{huang2020importance}. Moreover such an end-to-end optimization results in low interpretability of the results. There are several differences between this line of work and ours. We focus on real-world deployed system to illustrate that improved prediction accuracy using standard error metrics may not result in improved overall decision quality. In addition we focus on RMABs and provide tailored approaches to evaluate and explain the performance of the prediction component of RMABs.



\section{Preliminaries}

\paragraph{Restless Multi-Armed Bandits}
The RMAB framework is characterized by $N$ independent Markov Decision Processes \cite{mdp-puterman}, which are referred to as arms. Each arm is represented by a 4-tuple $\{\mathcal{S}, \mathcal{A}, R, \mathcal{P}\}$. $\mathcal{S}$ denotes the state space,  which could be a good state "beneficiary adhering to the program" or bad state "beneficiary not adhering to the program". $\mathcal{A}$ is the set of possible actions which we consider to be binary in our case, i.e., an action could be active, pulling an arm; or passive, not pulling an arm.  $R$ is the reward function $R: \mathcal{S} \times \mathcal{A} \times \mathcal{S} \rightarrow \mathbb{R}$. And $P$ denotes the probability of transitioning to the next state $s' \in \mathcal{S}$ starting from a current state $s \in \mathcal{S}$ under action $a \in \mathcal{A}$. We denote this probability as $P(s, a, s')$. The policy $\pi$ for an arm is defined as the mapping $\pi : \mathcal{S} \rightarrow \mathcal{A}$, i.e., it dictates the action to be taken given the current state. The objective that we maximize in the RMAB framework is sum of expected discounted rewards for all arms. For a single arm having a starting state $s_0$, this reward can be written as $V_{\beta}^\pi(s_0) = \mathbb{E}\left[\sum_{t=0}^\infty \beta^tR(s_t, \pi (s_t), s_{t+1}|\pi, s_0)\right]$. The next state are drawn from the distribution $s_{t+1} \sim P(s_{t}, {\pi(s_t)}, s_{t+1})$ where $\beta \in [0,1)$ is called the discount factor and $P$ represents the transition probabilities of that arm.

Finding the optimal solution for RMAB problems is known to be PSPACE-hard \cite{papadimitriou1994complexity}. The Whittle Index policy \cite{whittle-rbs} is a computationally efficient heuristic for solving RMAB. The idea of Whittle Index is to provide a subsidy whenever the passive action is chosen by the planner. The value of the infimum subsidy such that there is no difference in choosing among active or passive actions is then defined as the Whittle Index. Specifically, $W(s)=inf_{\lambda}\{\lambda:Q_{\lambda}(s,a=0)=Q_{\lambda}(s,a=1)\}$ where $Q(s, a)$ is the $Q$-value or expected discounted future reward of taking an action $b$ from state $s$. The Whittle index policy operates by selecting $k$ beneficiaries having the highest Whittle indices in each decision step.

\paragraph{ARMMAN}
Beneficiaries are enrolled into ARMMAN's mMitra program by healthcare workers. These enrollments are either made at hospitals or through door-to-door surveys. At registration time, beneficiaries' socio-demographic information such as age, education, income, number of children, gestational age, etc is noted. Additionally, based on whether the beneficiaries have already delivered the baby or not, they are enrolled in the Antenatal Care or Postnatal Care program. Then automated voice calls with health information tailored according to the gestational age of the beneficiaries are sent; the duration of call listened to is stored in a database. Both the listenership data and the demographic information is stored in an anonymized manner.

The engagement behaviour of every beneficiary is modelled through an MDP. The binary valued actions correspond to making a service call (denoted by $a=1$, active action) or not making a service call (denoted by $a=0$, passive action); the action set $\mathcal{A} = \{0,1\}$. 
In the ARMMAN setting, we define the states of each beneficiary (arm) based on their recent engagement with the system. Specifically, if a beneficiary listens to at least one automated voice call with more than 30 seconds in a week, the beneficiary is marked as engaging. Thus, $s=0$, corresponds to Non-Engaging (NE) state and $s=1$ corresponds to Engaging (E) state; the set of states $\mathcal{S} = \{0,1\}$. Finally, with 2 states and 2 actions, the Markov chain for every beneficiary can be represented using a 2-state Gilbert-Elliot model \cite{gilbert1960capacity}. The reward function is chosen to maximize the engagement of beneficiaries (i.e., number of engaging beneficiaries) in the long run. Specifically, the reward function for the $n^{th}$ arm/beneficiary is simply defined to be, $R_n(s)=s$ for state $s \in \{0,1\}$.

\section{Methodology} 

The key idea that underlies the analysis is to use the transition data of the beneficiaries observed in the real-world as the ground truth i.e., as a basis of our analysis. Specifically, the observational data reflects the \emph{true} transition probabilities of the arms (beneficiaries) which, ideally-speaking should have been the basis of our decisions. However, as described next, the limited amount of observational data presents itself as a major challenge here.

\paragraph{Computing Missing Observed Transition Probabilities via Clustering.} The observed transitions of each beneficiary $i$ forms an sequence $\langle (s, a, s') \rangle$, where $(s, a, s')$ denotes that the beneficiary $i$ transitioned from the state $s$ to state $s'$ under action $a$ in a particular week; here $s,s',a \in \{0,1\}$. From the observed transitions we can empirically estimate the true transition probabilities of each arm. However, due to the limited number of active interventions, we do not have sufficient observed data to estimate all the transition probabilities for each beneficiary. In particular, the amount of active transition probabilities are limited because most of the beneficiaries (more than $80\%$) do not receive even a single service call during the entire study, due to which we cannot empirically estimate their true active transition probabilities; however, for most beneficiaries, we have sufficient observational data to estimate the passive transition probabilities.

To compute the missing transition probabilities we (i) cluster the beneficiaries based on their observed passive transition probabilities, then (ii) for each resulting cluster, we pool the observed transitions of all the beneficiaries in that cluster, this ensures that we have sufficient data to compute active transition probabilities for each cluster. Then, (iii) the missing active transition probabilities of each beneficiary is assigned to be the active transition probability of the cluster in which that beneficiary lies. This procedure enables us to compute the missing observed transition probabilities.  

Once the missing observed transition probabilities are inferred using the described method, we analyze the errors in predicting transition probabilities. However, we show that the prediction errors are insufficient in explaining the real-world performance. This motivates us to pursue an alternate, more decision-focused evaluation approach. The decision-focused analysis is based on evaluating the errors in computing the top Whittle indices, which is in line with the Whittle index policy. Towards this, we compare/contrast different methods of computing the Whittle indices and show that a definition based on the well-known Spearman's footrule distance~\cite{diaconis1977spearman} is best suited in this context. Using this, we analyze the performance of the RMAB system deployed in the context of ARMMAN. To get a deeper understanding of the performance, we also present a probabilistic analysis to quantify how well is our RMAB system performing as compared to a purely random algorithm as per the proposed metric.






\section{Real-World ARMMAN studies}
\label{section:previous_studies}





The first real-world study in which RMAB system was deployed in the context of ARMMAN was performed by Mate et al. in April 2021. This study tracks a cohort of 23000 beneficiaries for 7 weeks. The cohort was divided into three groups -- round robin, RMAB and the current standard of care (CSOC). In every group, 125 beneficiaries were selected for intervention every week. In the round robin group, beneficiaries are given service call on a first-come first serve basis based on their registration date. In the RMAB group, beneficiaries are chosen for service calls using the Whittle index policy. The current standard of care group received no service call. The study demonstrated that the RMAB-based system resulted in a $\sim30\%$ reduction in the engagement drops as compared to CSOC group.

To rigorously check the efficacy of the RMAB model, several subsequent field tests have been performed. Specifically, we performed two followup studies where different training datasets are used to learn the mapping function from beneficiaries' demographic data to the MDP parameters; the training dataset in the context of ARMMAN simply comprises of (i) the beneficiaries' socio-demographic features, and (ii) their observed transition data from a past field study. In the April 2021 study, a training dataset used to learn the demographic features to the MDP parameter mapping was collected in a study performed in May 2020 where an RMAB system wasn't used to plan the interventions. 
The second study was performed in  January 2022 using a cohort of 44,000 beneficiaries and it went on for $5$ weeks wherein in each week, in each group, 250 beneficiaries were chosen for intervention. The demographic features to transition probability mapping was learnt using trajectories of beneficiaries' behaviour observed in the April 2021 Study, i.e., the training dataset for January 2022 study was April 2021. In the third study, which was performed in May 2022, the data from the January 2022 study was used for training; it tracked a cohort of 15,000 beneficiaries and the budget of intervention each week was 175. The RMAB system performed very well in the April 2021 study -- resulting in more than ~$30\%$ reduction in the engagement drops as compared to the CSOC baseline.
Figure \ref{fig:studies} shows the cumulative engagement drops prevented by RMAB system as compared to the CSOC group adjusted by the number of service calls.
Clearly, a consistent performance of the RMAB system is not observed across the 3 studies, with April 2021 study performing the best, followed by May 2022 and January 2022 studies wherein the cumulative engagement drops prevented by RMAB system weren't significant.
Given this, the goal of the current work is to diagnose and identify the right evaluation method through which we can explain the different performance of the RMAB model across the three studies.

\begin{figure}
    \centering
    \includegraphics[width=0.7\columnwidth]{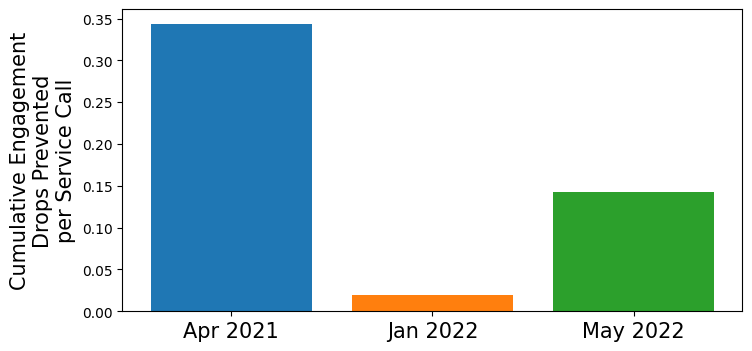}
    \caption{Performance of the RMAB system across the three studies}
    \label{fig:studies}
\end{figure}

\section{Analyzing Errors in the Prediction Stage}
\label{section:transition_probability_errors}

In this section, we provide a comparative analysis of the performance of the RMAB system in the three field tests and we highlight the key learning from these studies. We begin by analyzing the errors in predicting transition probabilities. The main observation we make here is that an overall improvement in the prediction accuracy -- at least as measured by standard metrics such as RMSE or MEA -- may not result in a concomitant improvement in the real-world performance. 

\subsubsection{Comparing Prediction Errors across Studies}
The RMAB-based system in its first phase predicts the MDP parameters of each arm, i.e., the transition probabilities for each beneficiary, denoted by $P_i(s,a,s')$ for each beneficiary $i$ and $s,a,s' \in \{0,1\}$. 
We define the prediction error based on comparing the predicted transition probabilities and the observed transition probabilities for each beneficiary.  We use $O_i(s,a,s')$ to denote the observed (or true) transition probability for each beneficiary. 
Note that, the transition probabilities corresponding to each beneficiary are related by the equations $P_i(s,a,s') = 1 - P_i(s,a, 1-s')$ and $O_i(s,a,s') = 1 - O_i(s,a, 1-s')$ for each  $s,a,s' \in \{1,0\}$. We take this fact into consideration while defining the cumulative error for each beneficiary, the definition of cumulative error is based on only the four independent transition probabilities $P_i(s, a, s)$ where $s,a \in \{0,1\}$.

There are two natural ways of defining the cumulative errors of each beneficiary $i$: first, as the Root Mean-Square Error (RMSE) denoted by $\mathcal{E}_i^{RMSE}$ and second as the Mean Absolute Error (MAE) denoted by $\mathcal{E}_i^{MAE}$. Formally,

\begin{align*}
    \label{equation:cumulative_errors}
    \mathcal{E}^{RMSE}_i = \sqrt{\frac{1}{4} \sum_{s \in \{0,1\}} \sum_{a \in \{0,1\}} \big(P_i(s,a,s) - O_i(s,a,s)\big)^2}
\end{align*}

\begin{align*}
    \mathcal{E}^{MAE}_i = \frac{1}{4} \sum_{s \in \{0,1\}} \sum_{a \in \{0,1\}} |P_i(s,a,s) - O_i(s,a,s)|
\end{align*}

For each beneficiary, we compute the cumulative error values --- $\mathcal{E}^{RMSE}_i$ and $\mathcal{E}^{MAE}_i$ --- as defined above. We then compare the distribution of cumulative errors, and analyze the mean and median error values across the three studies we performed.

\begin{table}[!htbp]
\centering
\caption{Cumulative transition probability prediction errors across studies.}\label{table:cumulative-tp-errors}
\begin{tabular}{|c|c|c|c|}
 \hline
 Error statistics & Apr 2021 & Jan 2022 &  May 2022 \\ 
 \hline \hline
 Mean of $\mathcal{E}^{RMSE}_i$ & 0.382 & 0.451 & 0.345 \\ 
 Median of $\mathcal{E}^{RMSE}_i$ & 0.375 & 0.461 & 0.340 \\
 Mean of $\mathcal{E}^{MAE}_i$ & 0.333 & 0.4 & 0.3 \\ 
 Median of $\mathcal{E}^{MAE}_i$ & 0.311 & 0.394 & 0.291 \\
 \hline

\end{tabular}
\end{table}

In Table \ref{table:cumulative-tp-errors} the mean and the median of cumulative errors, $\mathcal{E}^{RMSE}_i$ and $\mathcal{E}^{MAE}_i$, are shown for the three field studies; the mean and median is computed across all the beneficiaries. Clearly, we can see that the errors in January 2022 study are the highest, wrt to both mean and median. In fact, the cumulative errors are in the following order: errors in January 2022 are the highest, followed by April 2021, and the lowest prediction errors are observed in May 22 study. 

Indeed, this is in direct contrast with the real-world performance of RMAB observed in these studies: as previously illustrated, the performance of RMAB system in May 2022 was much worse as compared to April 2021. Thus, the prediction accuracy, measured by using the standard measures like RMSE and MAE, are not at all indicative of the real-world performance of the RMAB system.

\begin{figure}[!htbp]
\centering
\includegraphics[width=0.8\columnwidth]{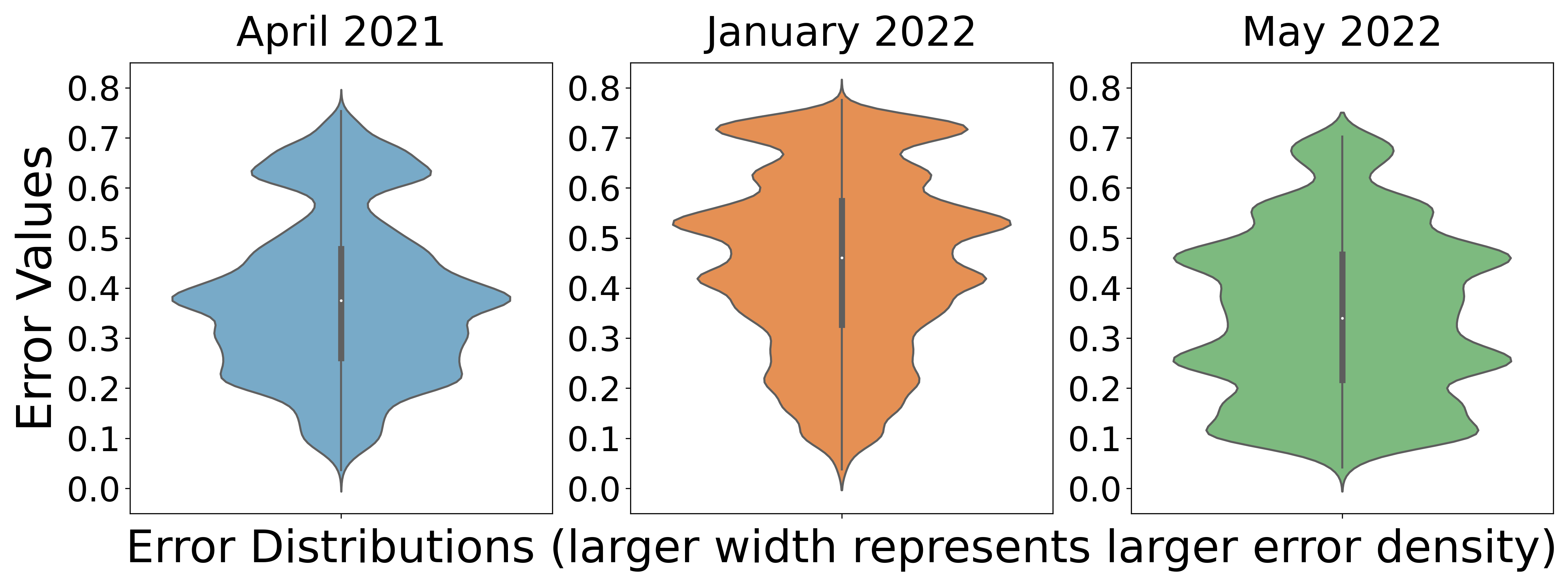}
\caption{The cumulative errors distribution in January 2022 is skewed towards the high-error region, whereas in May 2022 the errors are localized in the low-error region.} \label{figure:cumulative_tp_errors}
\end{figure}

To get more insight, in addition to comparing the mean and median error values, we also analyze the distribution of error values. The distribution of cumulative RMSE errors is shown in Figure \ref{figure:cumulative_tp_errors}. The error distributions, represented as violin plots, depicts the error distribution as a density function. Specifically, error values are shown on the y-axis and the x-axis shows the error density --- the higher the error density for a particular error value, the higher the width of the violin plot at the particular error value\footnote{Since distributions are represented by density function, the area under each distribution is the same}.

Clearly, the distribution of errors is more spread apart in January 2022 -- the errors are concentrated in the high-error regions (error values 0.4 and above). In contrast, the error distribution for the May 2022 study is localized in the low-error region and the distribution tapers in the high-error region (error values 0.5 and above). For the April 2021 study, we can observe two bumps in the error distribution, first, around the error density 0.6 and second around the error density 0.4; the bump in the high-error region is causing the mean and median error values to become higher. Thus, upon visually comparing the error distribution, we arrive at the same conclusion: the prediction errors in January 2022 are the highest, the errors are the lowest in May 2022 study and in April 2021 the errors lie in between the other two studies. Again this is in direct contrast with the performance of the RMAB system in these studies as depicted in Figure \ref{fig:studies} where we see that the performance of RMAB in May 2022 was worse as compared to April 2021.\\

\section{Decision-Focused Evaluation: Analysing Errors in Computing Whittle Indices}
\label{section:decision_focused_analysis}


We now present the decision-focused criterion for analyzing the performance of RMABs. The central idea behind decision-focused evaluation is to analyze the errors in the final decision quality which in turn is captured by the Whittle indices in our formulation. Indeed, this analysis is much more aligned with the Whittle index policy for solving RMABs, as it reveals how inaccurate the RMAB system was in predicting the top Whittle indices.

At every decision step (i.e., every week for ARMMAN), the Whittle index policy selects k beneficiaries having the highest Whittle index (as per their current states). Given this, it is natural to perform the Whittle index error analysis week-by-week. Specifically, for a given study and a given week, we analyze the errors in computing top-k Whittle indices. 

To compute the Whittle index errors, we compare the top-k predicted Whittle indices of beneficiaries to the \emph{observed Whittle indices} -- the Whittle indices of beneficiaries computed using the observed (true) transition probabilities. 

We begin by introducing required notations. For simplicity, we will use the set of first n natural numbers $[n] = \{1, 2, \ldots, n\}$ to denote the set of all beneficiaries; here $n$ denotes the total number of beneficiaries. For a given week, we will use a permutation $P = (b_1, b_2, \ldots, b_n)$ of $[n]$ to denote the ordered sequence of beneficiaries sorted in descending order as per their Whittle indices. 
Similarly, we define $O = (b'_1, b'_2, \ldots, b'_n)$ to denote the ordering of beneficiaries as per their observed Whittle indices in descending order, i.e., $b'_1$ has the highest observed Whittle index in a given week. Furthermore, the integers $\{1, 2, 3, \ldots, k\}$ will be used to denote the $k$ beneficiaries having the highest predicted Whittle indices in a given week, i.e., beneficiary $i$ has the the $i$th highest Whittle index in that week.


\subsection{Quantifying errors in top-k Whittle indices}

Errors in top Whittle indices can be defined in multiple ways and apriori it is not clear which method is the most appropriate. We next present a comparative analysis of various definitions of errors in top-k Whittle indices. We begin by highlighting the shortcomings of the seemingly-natural ways of defining the errors in top-k Whittle indices. Based on this, we select an error definition that doesn't suffer from these shortcomings and we use it to analyze the errors in top-k Whittle indices across the three studies.\\

\noindent
\emph{{\bf 1. Absolute and Normalized Whittle Index Errors:}} Denote by $WI_i^p$ and $WI_i^o$ the predicted and observed Whittle indices of each beneficiary $i$ having top-k predicted Whittle index in a given week i.e. $i \in \{1, 2, \ldots, k\}$. Given this, arguably the simplest way of defining errors in top-k Whittle index is to consider the absolute difference in the predicted and the observed Whittle index for all top-k beneficiaries, denoted by $\mathcal{E}^{abs} = \frac{1}{k} \sum_{i=1}^k |WI^p_i - WI^o_i|$.
However, this seemingly natural definition of errors has an issue: the error values cannot be compared between different studies or different algorithms because the predicted \& observed Whittle indices between two studies can have very different magnitudes as they depend on the cohort of beneficiary in a study; see appendix for supporting data. To mitigate this we can define the normalized error value of each beneficiary $i$ so that it captures the percentage change between the predicted Whittle indices and the observed Whittle indices. 
Formally,

\begin{align*}
    \mathcal{E}^{norm} = \frac{1}{k} \sum_{i=1}^k \frac{|WI^p_i - WI^o_i|}{|WI^p_i|}
\end{align*}

However, this error definition also suffers from a similar issue. We observe that the normalization factor in the denominator i.e., the predicted Whittle indices, $WI^p_i$, depend on the cohort of the beneficiaries, and vary a lot in their magnitudes for different studies, making the error values incomparable across different studies; the details of this observation been deferred to the Appendix.

Notably, a high level issue with both the previous two definitions is that the error values depends on the magnitude of the Whittle indices and not on the ordering, while the Whittle index policy takes decisions solely based on the ordering of Whittle indices. This motivates the use of more sophisticated error definitions that primarily depend on the Whittle index ordering.


\noindent
\emph{{\bf 2. Kendall Tau distance for top-k:}} 
The Kendall Tau distance\cite{diaconis1977spearman} is a well-known metric for quantifying disarray between two permutations. For a given permutation $\pi$, denote by $\pi(i)$ the rank or the index of element $i \in [n]$ in the permutation $\pi$. Let $\sigma$ and $\pi$ be two permutations of the same set of element $[n] = \{1, 2, \ldots, n\}$.  The Kendall Tau distance between $\sigma$ and $\pi$, $\mathcal{K}(\sigma, \pi)$, is defined to be the fraction of discordant pairs between the two permutations, i.e., $\mathcal{K}(\sigma, \tau) = 2/n(n-1) \sum_{1 \leq i < j \leq n} \overline{\mathcal{K}}_{i,j}(\sigma, \tau)$ where $\overline{\mathcal{K}}_{i,j}(\sigma, \tau) = 1$ if the elements $i$ and $j$ are in the same order in both permutations $\sigma$ and $\pi$ and $\overline{\mathcal{K}}_{i,j}(\sigma, \tau) = 0$ otherwise. Formally, $\overline{\mathcal{K}}_{i,j}(\sigma, \pi) = (\sigma(i) > \sigma(j) \land \pi(i) < \pi(j)) \lor (\sigma(i) < \sigma(j) \land \pi(i) > \pi(j))$.

Note that, the Kendall Tau distance quantifies the disarray between relative ordering of elements in two permutations. 
To suit our context, we can modify it to capture the disarray in the top-k elements of $P$, we call this metric the top-k Kendall Tau distance, $\mathcal{K}^{top-k}(\sigma, \tau)$. Let $(s_1, s_2, \ldots, s_k)$ be the top-k elements in the permutation $\sigma$. Given this, the top-k Kendall Tau distance is defined as 

\begin{align*}
    \mathcal{K}^{top-k}(\sigma, \tau) = \frac{2}{n(n-1)} \sum_{1 \leq i < j \leq k} \overline{\mathcal{K}}_{s_i,s_j}(\sigma, \tau)
\end{align*}

where $\sigma$ and $\tau$ are permutations on $k$ elements \footnote{Note that unlike the normalized Kendall Tau distance, the top-k Kendall Tau distance is not symmetric, i.e., $\mathcal{K}^{top-k}(\sigma, \pi) \neq \mathcal{K}^{top-k}(\sigma, \pi)$.}. Based on this definition, we define the errors in top-k Whittle indices as $\mathcal{E}^{kt} = \mathcal{K}^{top-k}(P, O)$. While this error value $\mathcal{E}^{kt}$ can be compared across different studies and different weeks of a given study (because it is normalized by $n(n-1)/2$ and the error values do not depend on the magnitude of Whittle indices), the error definition still has an issue: consider the case when the top-k elements of $P$ appear in the same relative order at the end of the permutation $O$ (i.e., as a suffix); the error value, $\mathcal{E}^{kt} = 0$ for this case. However, in this case, we would want the error to be high since the beneficiaries predicted to have the top-k Whittle indices actually have the lowest observed Whittle indices. In other words, this error definition only captures the relative difference between the ordering of the top-k elements in the orderings $P$ and $O$ -- ignoring their relative positions -- whereas, as highlighted by the previous example, this is insufficient to semantically quantify the Whittle index errors.\\

\noindent
\emph{{\bf 3. Spearman's footrule for top-k:}} We now present a definition of errors in top-k Whittle indices that is a modification of the well-known Spearman's footrule distance \cite{diaconis1977spearman} used to quantify the difference between two permutations. We then show that this definition does not suffer from the shortcomings of the previous definition, and finally compare the errors across different studies wrt this decision-focused error definition. Formally, the Spearman's distance $\mathcal{S}(\sigma, \pi)$ between two permutations $\sigma$ and $\pi$ of the set $[n] = \{1, 2, \ldots, n\}$ is defined as $\mathcal{S}(\sigma, \pi) = \sum_{i=1}^n |\sigma(i) - \pi(i)|$.

To capture the errors in top-k Whittle indices we modify the Spearman's footrule to $(i)$ consider only the top-k elements in $\sigma$ and additionally $(ii)$ we add a normalization factor of $n$. Without loss of generality, let $(s_1, ,s_2 \ldots, s_k)$ be the top-k elements of $\sigma$. Then, the Spearman's footrule for top-k is defined as

\begin{align*}
    \mathcal{S}^{top-k}(\sigma, \pi) & = \frac{1}{k} \sum_{i=1}^k \frac{|\sigma(s_i) - \pi(s_i)|}{n} = \frac{1}{k} \sum_{i=1}^k \frac{|i - \pi(s_i)|}{n} \tag{since $\sigma(s_i) = i$}
\end{align*}

The normalization factor of $n$ is added to the denominator because the quantity $|\sigma(i) - \pi(i)|$ is bounded by the total number of elements/beneficiaries $n$ --- after normalization the error values lie in the interval $[0,1]$ enabling comparison of error values across different studies wherein the number of beneficiaries, $n$, are different. Based on this definition, we define the error in top-k Whittle indices as $\mathcal{E}^{s} = \mathcal{S}^{top-k}(P, O) = \frac{1}{k} \sum_{i=1}^k \frac{|i - O(s_i)|}{n}$ where $s_1, s_2, \ldots, s_k$ are the first $k$ elements of $P$. Additionally, the Whittle index error for each beneficiary $i$ having rank $i$ as per the predicted Whittle index will be denoted by $\mathcal{E}^s_i = \frac{|i - O(s_i)|}{n}$. Therefore,

\begin{align}
    \label{definition:Spearman-WI-error}
    \mathcal{E}^s = \frac{1}{k} \sum_{i=1}^k \mathcal{E}^s_i \ \ \text{ where } 1 \leq i \leq k, \ \ \mathcal{E}^s_i = \frac{|i - O(s_i)|}{n}
\end{align}

This definition quantifies the average shift in the ranks/indices of the top-k beneficiaries (as per ordering $P$) between $P$ and $O$ --- intuitively this is the quantity we want to capture in the error value as the decisions made by RMAB system as based on top-k Whittle indices. Indeed this error definition does not depend on the Whittle index magnitudes and does not suffer from the shortcomings of the previous definitions. 

In the next section, we show that the error values based on the top-k Spearman's footrule distance for the three studies indeed match-up perfectly with the performance of the RMAB-based system observed in the real-world.

\subsection{Errors in Computing top-k Whittle indices}

As previously described, we use the error definition based on the top-k Spearman's footrule, $\mathcal{E}^s = \mathcal{S}^{top-k}(P, O) = \frac{1}{k} \sum_{i=1}^k \frac{|i - O(i)|}{n}$.

For the three studies we compare the errors in top-200 Whittle indices, i.e., we set $k= 200$. This choice is based on the fact that the number of interventions in the April 2021, January 2022, and May 2022 studies are 125, 250, and 175 respectively. Therefore, the value $k = 200$ acts as a middle-ground, enabling fair comparison across studies.

First we will compare the Whittle index error values across the studies, week-by-week, for the first four weeks of all the studies. Then, we will analyze the cumulative errors which combine the weekly error values into a single per-study error value.\\

\noindent
\emph{{\bf (i) Week-by-week comparison of Whittle index errors:}} The mean and median values of the Whittle index errors for the first four weeks of each study are shown in tables \ref{table:mean-WI-errors}, \ref{table:median-WI-errors} respectively.

\begin{table}[!htbp]
\centering
\caption{Week-wise comparison of mean errors in computing top Whittle indices.}\label{table:mean-WI-errors}
\begin{tabular}{|p{2.5cm}|c|c|c|}
 \hline
 Week of the study & Apr 2021 & Jan 2022 &  May 2022 \\ 
 \hline \hline
 Week 1 & 0.424 & 0.505 & 0.492 \\ 
 Week 2 & 0.439 & 0.486 & 0.495 \\ 
 Week 3 & 0.435 & 0.488 & 0.49 \\ 
 Week 4 & 0.446 & 0.502 & 0.465 \\
 \hline
 Cumulative mean errors & 0.436 & 0.495 & 0.486 \\
 \hline
\end{tabular}
\end{table}

\begin{table}[!htbp]
\centering
\caption{Week-wise comparison of median errors in computing top Whittle indices.}\label{table:median-WI-errors}
\begin{tabular}{|p{2.5cm}|c|c|c|}
 \hline
 Week of the study & Apr 2021 & Jan 2022 &  May 2022 \\ 
 \hline \hline
 Week 1 & 0.376 & 0.481 & 0.471 \\ 
 Week 2 & 0.387 & 0.468 & 0.488 \\ 
 Week 3 & 0.389 & 0.485 & 0.477 \\ 
 Week 4 & 0.396 & 0.473 & 0.46 \\
 \hline
 Cumulative median errors & 0.387 & 0.477 & 0.473 \\
 \hline
\end{tabular}
\end{table}

We can observe that the Whittle index error values are the highest for the January 2022 study, followed by May 2022 study and they are the lowest for the April 2021 study. This order of the error values exactly match with the real-world performance of the RMAB system in these studies (Figure \ref{fig:studies}) where the performance of RMAB systems was best in April 2021, followed by May 2022, and January 2022. Therefore, unlike the prediction accuracy analysis, the errors in top-k Whittle indices --- the decision-focused evaluation criterion --- is able to explain the overall performance of the RMAB system in the real-world. In principle, this makes sense because the RMAB system uses top-k Whittle indices to take intervention decisions, while transition probabilities are essentially intermediate values that do not directly influence the decision choices made by RMAB system. \\

\noindent
\emph{{\bf (ii) Comparison of cumulative Whittle index errors:}}
In addition to comparing the errors week-by-week, we can combine the error values across all four weeks for each study to get a per-study error distribution. This distribution of errors is shown in Figure \ref{figure:WI_error_distribution}. Tables \ref{table:mean-WI-errors} and \ref{table:median-WI-errors} list the corresponding values of cumulative mean and median errors. Here also we observe that the errors in April 2021 are lower as compared to the other studies.

\begin{figure}[!htbp]
\centering
\includegraphics[width=0.8\columnwidth]{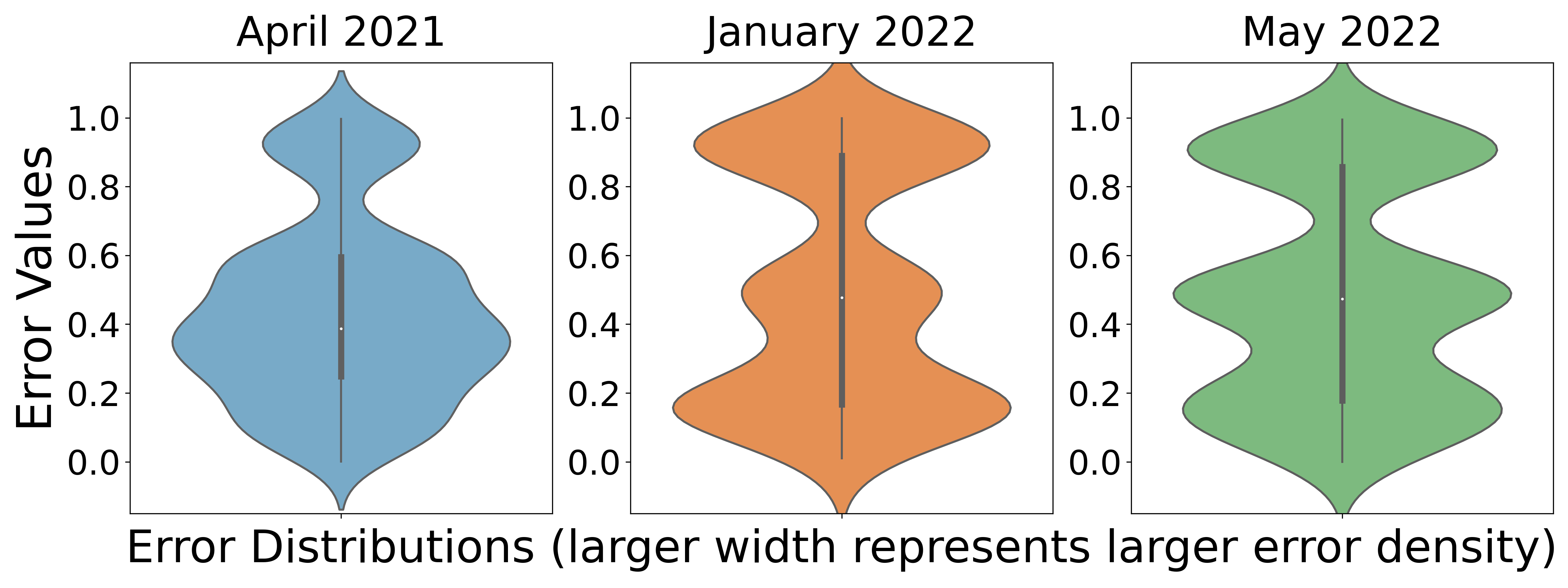}
\caption{The distribution of top-k Whittle index errors is shown for all three studies.} \label{figure:WI_error_distribution}
\end{figure}

\section{Probabilistic Analysis: Comparing RMAB System to a Purely Random Algorithm}
\label{appendix:probabilistic-analaysis}
To theoretically explain the varying performance of RMAB across studies we present a comparison between the RMAB system and a \emph{purely random algorithm} -- an algorithm which selects $k$ beneficiaries uniformly at random (with replacement) for intervention every week -- based on a probabilistic analysis. The analysis is based on computing the expected value and variance of the Whittle index error $\mathcal{E}^s$ (equation \ref{definition:Spearman-WI-error}) for a purely random algorithm. The resulting error values are then used as a baseline to compare the Whittle index error values observed in all the three studies. Thus, this analysis can be interpreted as answering the question: \emph{how well did the RMAB-based system perform as compared to the purely random algorithm?} Due to space limitation the entire analysis has been deferred to the Appendix.

\section{Conclusion}

First we demonstrated that the prediction accuracy analysis alone is insufficient and inconclusive in understanding the  performance of RMAB systems --- this was demonstrated using real-world data of an RMAB system deployed in the context of a maternal health awareness program together with a NGO. We then proposed a decision-focused evaluation method for RMAB systems and showed that this is a far more meaningful evalution to establish correlation with the real-world performance of the RMAB system. Notably, our work primarily focused on RMAB systems, solved using the Whittle index approach. We believe that this analysis will be useful for other deployed applications of RMAB in other domains as well. 






\bibliography{references}

\begin{thebibliography}{38}
\providecommand{\natexlab}[1]{#1}

\bibitem[{Bertsimas and Ni{\~n}o-Mora(2000)}]{bertsimas2000restless}
Bertsimas, D.; and Ni{\~n}o-Mora, J. 2000.
\newblock Restless bandits, linear programming relaxations, and a primal-dual
  index heuristic.
\newblock \emph{Operations Research}, 48(1): 80--90.

\bibitem[{Boettiger(2022)}]{Boettiger_2022}
Boettiger, C. 2022.
\newblock The forecast trap.
\newblock \emph{Ecology Letters}, 25(7): 1655--1664.

\bibitem[{Chang, Polesky, and Bhatia(2013)}]{chang2013house}
Chang, A.~H.; Polesky, A.; and Bhatia, G. 2013.
\newblock House calls by community health workers and public health nurses to
  improve adherence to isoniazid monotherapy for latent tuberculosis infection:
  a retrospective study.
\newblock \emph{BMC public health}, 13(1): 1--7.

\bibitem[{Corotto et~al.(2013)Corotto, McCarey, Adams, Khazanie, and
  Whellan}]{corotto2013heart}
Corotto, P.~S.; McCarey, M.~M.; Adams, S.; Khazanie, P.; and Whellan, D.~J.
  2013.
\newblock Heart failure patient adherence: epidemiology, cause, and treatment.
\newblock \emph{Heart failure clinics}, 9(1): 49--58.

\bibitem[{Diaconis and Graham(1977)}]{diaconis1977spearman}
Diaconis, P.; and Graham, R.~L. 1977.
\newblock Spearman's footrule as a measure of disarray.
\newblock \emph{Journal of the Royal Statistical Society: Series B
  (Methodological)}, 39(2): 262--268.

\bibitem[{Donti, Amos, and Kolter(2017)}]{donti2017task}
Donti, P.~L.; Amos, B.; and Kolter, J.~Z. 2017.
\newblock Task-based end-to-end model learning in stochastic optimization.
\newblock \emph{arXiv preprint arXiv:1703.04529}.

\bibitem[{Fang et~al.(2016)Fang, Nguyen, Pickles, Lam, Clements, An, Singh,
  Tambe, Lemieux et~al.}]{fang2016deploying}
Fang, F.; Nguyen, T.~H.; Pickles, R.; Lam, W.~Y.; Clements, G.~R.; An, B.;
  Singh, A.; Tambe, M.; Lemieux, A.; et~al. 2016.
\newblock Deploying PAWS: Field Optimization of the Protection Assistant for
  Wildlife Security.
\newblock In \emph{AAAI}, volume~16, 3966--3973.

\bibitem[{Ford et~al.(2015)Ford, Nguyen, Tambe, Sintov, and
  Fave}]{ford2015beware}
Ford, B.; Nguyen, T.; Tambe, M.; Sintov, N.; and Fave, F.~D. 2015.
\newblock Beware the soothsayer: From attack prediction accuracy to predictive
  reliability in security games.
\newblock In \emph{International Conference on Decision and Game Theory for
  Security}, 35--56. Springer.

\bibitem[{Futoma, Hughes, and Doshi-Velez(2020)}]{futoma2020popcorn}
Futoma, J.; Hughes, M.~C.; and Doshi-Velez, F. 2020.
\newblock Popcorn: Partially observed prediction constrained reinforcement
  learning.
\newblock \emph{arXiv preprint arXiv:2001.04032}.

\bibitem[{Gilbert(1960)}]{gilbert1960capacity}
Gilbert, E.~N. 1960.
\newblock Capacity of a burst-noise channel.
\newblock \emph{Bell system technical journal}, 39(5): 1253--1265.

\bibitem[{Glazebrook, Ruiz-Hernandez, and
  Kirkbride(2006)}]{glazebrook2006indexable}
Glazebrook, K.~D.; Ruiz-Hernandez, D.; and Kirkbride, C. 2006.
\newblock Some indexable families of restless bandit problems.
\newblock \emph{Advances in Applied Probability}, 38(3): 643--672.

\bibitem[{Huang and Jiang(2020)}]{huang2020importance}
Huang, J.; and Jiang, N. 2020.
\newblock From importance sampling to doubly robust policy gradient.
\newblock In \emph{International Conference on Machine Learning}, 4434--4443.
  PMLR.

\bibitem[{Jung and Tewari(2019)}]{jung2019regret}
Jung, Y.~H.; and Tewari, A. 2019.
\newblock Regret bounds for thompson sampling in episodic restless bandit
  problems.
\newblock \emph{Advances in Neural Information Processing Systems}, 32.

\bibitem[{Killian et~al.(2019)Killian, Wilder, Sharma, Choudhary, Dilkina, and
  Tambe}]{Killian_2019}
Killian, J.~A.; Wilder, B.; Sharma, A.; Choudhary, V.; Dilkina, B.; and Tambe,
  M. 2019.
\newblock Learning to Prescribe Interventions for Tuberculosis Patients Using
  Digital Adherence Data.
\newblock \emph{Proceedings of the 25th ACM SIGKDD International Conference on
  Knowledge Discovery \& Data Mining}.

\bibitem[{Liao et~al.(2020)Liao, Greenewald, Klasnja, and
  Murphy}]{liao2020personalized}
Liao, P.; Greenewald, K.; Klasnja, P.; and Murphy, S. 2020.
\newblock Personalized heartsteps: A reinforcement learning algorithm for
  optimizing physical activity.
\newblock \emph{Proceedings of the ACM on Interactive, Mobile, Wearable and
  Ubiquitous Technologies}, 4(1): 1--22.

\bibitem[{Liu, Liu, and Zhao(2012)}]{liu2012learning}
Liu, H.; Liu, K.; and Zhao, Q. 2012.
\newblock Learning in a changing world: Restless multiarmed bandit with unknown
  dynamics.
\newblock \emph{IEEE Transactions on Information Theory}, 59(3): 1902--1916.

\bibitem[{Liu and Zhao(2010{\natexlab{a}})}]{Liu2010}
Liu, K.; and Zhao, Q. 2010{\natexlab{a}}.
\newblock Indexability of restless bandit problems and optimality of Whittle
  index for dynamic multichannel access.
\newblock \emph{IEEE Transactions on Information Theory}, 5547--5567.

\bibitem[{Liu and Zhao(2010{\natexlab{b}})}]{LiuIndexRB}
Liu, K.; and Zhao, Q. 2010{\natexlab{b}}.
\newblock Indexability of Restless Bandit Problems and Optimality of Whittle
  Index for Dynamic Multichannel Access.
\newblock \emph{{IEEE} Trans. Inf. Theory}, 56(11): 5547--5567.

\bibitem[{Martin et~al.(2005)Martin, Williams, Haskard, and
  DiMatteo}]{martin2005challenge}
Martin, L.~R.; Williams, S.~L.; Haskard, K.~B.; and DiMatteo, M.~R. 2005.
\newblock The challenge of patient adherence.
\newblock \emph{Therapeutics and clinical risk management}, 1(3): 189.

\bibitem[{Mate et~al.(2020)Mate, Killian, Xu, Perrault, and
  Tambe}]{mate2020collapsing}
Mate, A.; Killian, J.; Xu, H.; Perrault, A.; and Tambe, M. 2020.
\newblock Collapsing Bandits and Their Application to Public Health
  Intervention.
\newblock \emph{Advances in Neural Information Processing Systems}, 34.

\bibitem[{Mate et~al.(2022)Mate, Madaan, Taneja, Madhiwalla, Verma, Singh,
  Hegde, Varakantham, and Tambe}]{mate2022field}
Mate, A.; Madaan, L.; Taneja, A.; Madhiwalla, N.; Verma, S.; Singh, G.; Hegde,
  A.; Varakantham, P.; and Tambe, M. 2022.
\newblock Field study in deploying restless multi-armed bandits: Assisting
  non-profits in improving maternal and child health.
\newblock In \emph{Proceedings of the AAAI Conference on Artificial
  Intelligence}, volume~36, 12017--12025.

\bibitem[{Mate, Perrault, and Tambe(2021)}]{mate2021risk}
Mate, A.; Perrault, A.; and Tambe, M. 2021.
\newblock Risk-aware interventions in public health: Planning with restless
  multi-armed bandits.
\newblock In \emph{Proceedings of the 20th International Conference on
  Autonomous Agents and MultiAgent Systems}, 880--888.

\bibitem[{Nino-Mora(2001)}]{nino2001restless}
Nino-Mora, J. 2001.
\newblock Restless bandits, partial conservation laws and indexability.
\newblock \emph{Advances in Applied Probability}, 33(1): 76--98.

\bibitem[{Ong'ang'o et~al.(2014)Ong'ang'o, Mwachari, Kipruto, and
  Karanja}]{ong2014effects}
Ong'ang'o, J.~R.; Mwachari, C.; Kipruto, H.; and Karanja, S. 2014.
\newblock The effects on tuberculosis treatment adherence from utilising
  community health workers: a comparison of selected rural and urban settings
  in Kenya.
\newblock \emph{PLoS One}, 9(2): e88937.

\bibitem[{Osband, Russo, and Van~Roy(2013)}]{osband2013more}
Osband, I.; Russo, D.; and Van~Roy, B. 2013.
\newblock (More) efficient reinforcement learning via posterior sampling.
\newblock \emph{Advances in Neural Information Processing Systems}, 26.

\bibitem[{Papadimitriou and Tsitsiklis(1994)}]{papadimitriou1994complexity}
Papadimitriou, C.~H.; and Tsitsiklis, J.~N. 1994.
\newblock The complexity of optimal queueing network control.
\newblock In \emph{Proceedings of IEEE 9th Annual Conference on Structure in
  Complexity Theory}, 318--322. IEEE.

\bibitem[{Perrault et~al.(2019)Perrault, Fang, Sinha, and
  Tambe}]{perrault2019ai}
Perrault, A.; Fang, F.; Sinha, A.; and Tambe, M. 2019.
\newblock Ai for social impact: Learning and planning in the data-to-deployment
  pipeline.
\newblock \emph{arXiv preprint arXiv:2001.00088}.

\bibitem[{Perrault et~al.(2020)Perrault, Wilder, Ewing, Mate, Dilkina, and
  Tambe}]{perrault2020end}
Perrault, A.; Wilder, B.; Ewing, E.; Mate, A.; Dilkina, B.; and Tambe, M. 2020.
\newblock End-to-end game-focused learning of adversary behavior in security
  games.
\newblock In \emph{Proceedings of the AAAI Conference on Artificial
  Intelligence}, volume~34, 1378--1386.

\bibitem[{Pollack et~al.(2002)Pollack, Brown, Colbry, Orosz, Peintner,
  Ramakrishnan, Engberg, Matthews, Dunbar-Jacob, McCarthy
  et~al.}]{pollack2002pearl}
Pollack, M.~E.; Brown, L.; Colbry, D.; Orosz, C.; Peintner, B.; Ramakrishnan,
  S.; Engberg, S.; Matthews, J.~T.; Dunbar-Jacob, J.; McCarthy, C.~E.; et~al.
  2002.
\newblock Pearl: A mobile robotic assistant for the elderly.
\newblock In \emph{AAAI workshop on automation as eldercare}, volume 2002.
  AAAI, 2002, Edmonton, Alberta, Canada.

\bibitem[{Puterman(1994)}]{mdp-puterman}
Puterman, M.~L. 1994.
\newblock \emph{Markov Decision Processes: Discrete Stochastic Dynamic
  Programming}.
\newblock Wiley Series in Probability and Statistics. Wiley.
\newblock ISBN 978-0-47161977-2.

\bibitem[{Qian et~al.(2016)Qian, Zhang, Krishnamachari, and
  Tambe}]{QianRestlessPoachers}
Qian, Y.; Zhang, C.; Krishnamachari, B.; and Tambe, M. 2016.
\newblock Restless Poachers: Handling Exploration-Exploitation Tradeoffs in
  Security Domains.
\newblock In Jonker, C.~M.; Marsella, S.; Thangarajah, J.; and Tuyls, K., eds.,
  \emph{Proceedings of the 2016 International Conference on Autonomous Agents
  {\&} Multiagent Systems, Singapore, May 9-13, 2016}, 123--131. {ACM}.

\bibitem[{Tuldrà et~al.(1999)Tuldrà, Ferrer, Fumaz, Bayés, Paredes, Burger,
  and Clotet}]{HIV}
Tuldrà, A.; Ferrer, M.~J.; Fumaz, C.~R.; Bayés, R.; Paredes, R.; Burger,
  D.~M.; and Clotet, B. 1999.
\newblock {Monitoring Adherence to HIV Therapy}.
\newblock \emph{Archives of Internal Medicine}, 159(12): 1376--1377.

\bibitem[{Verloop(2016)}]{verloop2016asymptotically}
Verloop, I.~M. 2016.
\newblock Asymptotically optimal priority policies for indexable and
  nonindexable restless bandits.
\newblock \emph{The Annals of Applied Probability}, 26(4): 1947--1995.

\bibitem[{Wang et~al.(2021)Wang, Shah, Chen, Perrault, Doshi-Velez, and
  Tambe}]{wang2021learning}
Wang, K.; Shah, S.; Chen, H.; Perrault, A.; Doshi-Velez, F.; and Tambe, M.
  2021.
\newblock Learning MDPs from Features: Predict-Then-Optimize for Sequential
  Decision Making by Reinforcement Learning.
\newblock \emph{Advances in Neural Information Processing Systems}, 34.

\bibitem[{Wang et~al.(2022)Wang, Verma, Mate, Shah, Taneja, Madhiwalla, Hegde,
  and Tambe}]{wang2022decision}
Wang, K.; Verma, S.; Mate, A.; Shah, S.; Taneja, A.; Madhiwalla, N.; Hegde, A.;
  and Tambe, M. 2022.
\newblock Decision-Focused Learning in Restless Multi-Armed Bandits with
  Application to Maternal and Child Care Domain.
\newblock \emph{arXiv preprint arXiv:2202.00916}.

\bibitem[{Wang, Huang, and Lui(2020)}]{wang2020restless}
Wang, S.; Huang, L.; and Lui, J. 2020.
\newblock Restless-UCB, an Efficient and Low-complexity Algorithm for Online
  Restless Bandits.
\newblock \emph{arXiv preprint arXiv:2011.02664}.

\bibitem[{Whittle(1988)}]{whittle-rbs}
Whittle, P. 1988.
\newblock Restless bandits: Activity allocation in a changing world.
\newblock \emph{Journal of applied probability}, 287--298.

\bibitem[{Wilder, Dilkina, and Tambe(2019)}]{wilder2019melding}
Wilder, B.; Dilkina, B.; and Tambe, M. 2019.
\newblock Melding the data-decisions pipeline: Decision-focused learning for
  combinatorial optimization.
\newblock In \emph{Proceedings of the AAAI Conference on Artificial
  Intelligence}, volume~33, 1658--1665.

\end{thebibliography}

\appendix

\section{Ethics}
We recognize the responsibility associated with deploying real-world AI systems that impacts underserved communities. In our approach, we have iteratively designed, developed and deployed the system in constant coordination with an interdisciplinary team of ARMMAN’s field staff, social work researchers, public health researchers and ethical experts. Particularly, all experiments, field tests and the deployment were performed after obtaining approval from ethics review board at both ARMMAN and Google.

\section{Consent and Data Usage}

The consent for participating in the mMitra program is received from beneficiaries in written form at the time of registration. Additionally, all the data collected through the program is owned by the NGO and only the NGO is allowed to share data. This dataset will never be used by Google for any commercial purposes. The data pipeline only uses anonymized data and no personally identifiable information (PII) is made available to the AI models. The data exchange and use was thus regulated through clearly defined exchange protocols including anonymization, read-access only to researchers, restricted use of the data for research purposes only, and approval by ARMMAN’s ethics review committee. 

\section{Universal Accessibility of Health Information}

To allay further concerns: our system focuses on improving quality of service calls and does not alter, for any beneficiary, the accessibility of health information. All participants will receive the same weekly health information by automated message regardless of whether they are scheduled to receive service calls or not. The service call program does not withhold any information from the participants nor conduct any experimentation on the health information. The health information is always available to all participants, and participants can always request service calls via a free missed call service.

\section{Shortcoming of the Normalized Whittle Index Error}

The normalized Whittle index error definition is capturing the percentage errors between the predicted and the observed Whittle indices. Intuitively this should enable comparison of error values between the studies. However, in trying to use the normalized error definition for the three ARMMAN studies we found an issue: we observed that the top predicted Whittle indices in some studies are lower than the other studies. Due to this the errors values come out to be higher, since the predicted Whittle indices appear in the denominator term. In table \ref{table:range_of_top_WI}, we show the range of top-200 Whittle indices for all the three studies. As we can see, the top-200 Whittle indices in the May 2022 study are much lower than the other two studies, due to this the normalized errors in May 2022 come out to be very high, highlighting a problem in the error definition. Additionally, note that the varying range of Whittle indices show the behavioral differences between the cohort of beneficiaries being considered in different studies, because Whittle indices are dependent on transition dynamics which in turn encode behavioral information of beneficiaries.

\begin{table}[!htbp]
\centering
\caption{Range of top-200 Whittle indices for all the studies.}\label{table:range_of_top_WI}
\begin{tabular}{|p{2.75cm}|c|c|c|}
 \hline
 Week of the study & Apr 2021 & Jan 2022 &  May 2022 \\ 
 \hline \hline
 Highest Whittle index & 0.865 & 1.005 & 0.536 \\ 
 Top 200th Whittle index & 0.836 & 0.868 & 0.360 \\
 \hline
\end{tabular}
\end{table}

\section{Probabilistic Analysis: Comparing RMAB System to a Purely Random Algorithm}
\label{appendix:probabilistic-analaysis}
In the subsequent analysis we compare the expected Whittle index error value of a purely random policy to the error values observed in the three real-world studies. Essentially, this enables us to quantify, how well the RMAB system is performing as compared to a random baseline, thereby offering us quantitative explanation of the difference in the observed performance across studies.

Formally, the purely random algorithm can be modeled as follows: in each week, the algorithm selects a permutation $P$ uniformly at random from the set of all permutations of the elements ${1, 2, \ldots, n}$. Then, it selects the top-k elements having the least rank in $P$ for intervention. The following theorem gives the closed form of the expected Whittle index errors $\mathbb{E}[\mathcal{E}^s]$ and an upper bound on the standard deviation of the errors $\sigma(\mathcal{E}^s)$ for a purely random algorithm. The proof of Theorem \ref{theorem:prob-analysis} has been deferred to the next section.


\begin{restatable}{theorem}{probAnalysis}
\label{theorem:prob-analysis}
The expected value of Whittle index errors for the purely random algorithm is
\[\mathbb{E}[\mathcal{E}^s] = \frac{1}{2} - \frac{k}{2n} + \frac{k^2-1}{3n^2},\]
where $k$ is the number of interventions and $n$ is the total number of beneficiaries. Additionally, the standard deviation of the error value \[\sigma(\mathcal{E}^s) \leq \frac{1}{2\sqrt{3k}}\]
for $k \leq 200$ and $n \geq 3000$.
\end{restatable}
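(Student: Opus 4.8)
The plan is to reduce $\mathcal{E}^s$ to a sum of identically distributed ``displacement'' terms, compute its mean exactly, and then bound its variance. Since the random algorithm draws $P$ uniformly and independently of the observed ordering $O$, the top-$k$ elements $s_1,\dots,s_k$ of $P$ form a uniformly random ordered $k$-subset of $[n]$, so the observed ranks $X_i := O(s_i)$ are a uniform sample without replacement from $[n]$; writing $Y_i := |i - X_i|$ we have $\mathcal{E}^s = \frac{1}{kn}\sum_{i=1}^k Y_i$. The crucial structural fact is that, by symmetry, each $X_i$ is marginally uniform on $[n]$, so every $Y_i$ has the law of $|i-U|$ for $U$ uniform on $[n]$.

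For the expectation I would just use linearity. Splitting the sum at $a=i$ gives $\mathbb{E}[Y_i] = \frac{1}{n}\bigl(\tfrac{(i-1)i}{2} + \tfrac{(n-i)(n-i+1)}{2}\bigr)$, and then $\mathbb{E}[\mathcal{E}^s] = \frac{1}{kn}\sum_{i=1}^k \mathbb{E}[Y_i]$ collapses to the claimed closed form after substituting $\sum_{i=1}^k i = \tfrac{k(k+1)}{2}$ and $\sum_{i=1}^k i^2 = \tfrac{k(k+1)(2k+1)}{6}$ and simplifying; the $(k+1)(k-1)$ factors combine to produce the $\tfrac{k^2-1}{3n^2}$ term. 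This part is routine.

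The variance is where the real work lies. I would write $\mathrm{Var}(\mathcal{E}^s) = \frac{1}{k^2n^2}\bigl(\sum_{i=1}^k \mathrm{Var}(Y_i) + \sum_{i\neq j}\mathrm{Cov}(Y_i,Y_j)\bigr)$ and dispose of the cross terms first. A short computation using the without-replacement pair law gives $\mathrm{Cov}(Y_i,Y_j) = -\tfrac{1}{n-1}\mathrm{Cov}_U\bigl(|i-U|,|j-U|\bigr)$, and for $i,j \le k$ the two maps $u\mapsto|i-u|$ and $u\mapsto|j-u|$ are comonotone except on the short interval between $i$ and $j$; since $k\ll n$ that interval is negligible and the covariance of the pair is nonnegative, so every $\mathrm{Cov}(Y_i,Y_j)\le 0$. (Under the with-replacement reading of the algorithm the $Y_i$ are simply independent and these terms vanish outright.) Hence $\mathrm{Var}(\mathcal{E}^s)\le \frac{1}{k^2n^2}\sum_{i=1}^k \mathrm{Var}(Y_i)$.

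It then remains to show $\mathrm{Var}(Y_i)\le n^2/12$ for every $i\le k$. Using the exact moments $\mathbb{E}[Y_i^2] = \tfrac{(n+1)(2n+1)}{6} - i(n+1) + i^2$ together with the $\mathbb{E}[Y_i]$ above, the leading terms give $\tfrac{n^2}{3}-\tfrac{n^2}{4}=\tfrac{n^2}{12}$, and the correction terms are controlled precisely by the hypotheses $k\le 200$ and $n\ge 3000$ (the bound is tightest near $i=1$, where $Y_i$ is uniform on $\{0,\dots,n-1\}$ and $\mathrm{Var}(Y_1)=\tfrac{n^2-1}{12}$). Summing, $\mathrm{Var}(\mathcal{E}^s)\le \frac{1}{k^2n^2}\cdot k\cdot\frac{n^2}{12} = \frac{1}{12k}$, and taking square roots yields $\sigma(\mathcal{E}^s)\le \frac{1}{\sqrt{12k}} = \frac{1}{2\sqrt{3k}}$. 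I expect the main obstacle to be exactly this last step: controlling the discrete per-index variance and the covariance signs simultaneously, which is what forces the explicit numerical constraints on $k$ and $n$ rather than a fully general statement.
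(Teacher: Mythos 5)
Your mean computation and your overall variance decomposition (per-index variance plus pairwise covariances, with the former bounded by $n^2/12$ in your normalization and the latter by $0$) match the paper's proof, which isolates exactly these two facts as its Proposition~1 and then concludes $\mathrm{Var}(\mathcal{E}^s)\le 1/(12k)$ just as you do. Where you genuinely depart is in \emph{how} the two facts are established. For the covariance sign, the paper expands $\mathbb{E}[\mathcal{E}^s_i\mathcal{E}^s_j]$ under the without-replacement pair law and forces negativity with crude bounds (replacing $\sum_{a\neq b}|a-i|\,|b-j|$ by $\bigl(n(n-1)/2\bigr)^2$ and lower-bounding $\sum_{a}|a-i|\,|a-j|$ by $\sum_{a=1}^{n-200}a^2$), then locates the real roots of the resulting quartic (approximately $1$ and $2178.45$) to get negativity for $n\ge 3000$. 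You instead invoke the exact exchangeable-pair identity $\mathrm{Cov}(Y_i,Y_j)=-\tfrac{1}{n-1}\mathrm{Cov}_U\bigl(|i-U|,|j-U|\bigr)$, which reduces the question to the sign of a one-variable covariance; this is cleaner, exact, and explains \emph{why} the sign is negative rather than certifying it numerically. Likewise for the per-index variance: the paper computes $\mathrm{Var}(\mathcal{E}^s_1)$ exactly and handles general $i$ by a conditioning argument, while your exact second moment in fact yields the identity $\mathrm{Var}(Y_i)=\tfrac{n^2-1}{12}+a(1-a)$ with $a=i(n+1-i)/n\ge 1$, so $\mathrm{Var}(Y_i)\le n^2/12$ holds for \emph{every} $i$ with no constraint on $k$ or $n$ (the hypotheses $k\le 200$, $n\ge 3000$ are needed only for the covariance half). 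You also correctly flag the paper's with-replacement/without-replacement ambiguity and note that the with-replacement reading trivializes the cross terms.

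One step of yours needs tightening to be airtight: ``the interval between $i$ and $j$ is negligible, so $\mathrm{Cov}_U\bigl(|i-U|,|j-U|\bigr)\ge 0$'' is the right idea but not yet a proof. A short quantitative version: write $|u-i|=(u-i)+2(i-u)^+$ and similarly for $j$, so that $\mathrm{Cov}_U\bigl(|i-U|,|j-U|\bigr)=\mathrm{Var}(U)+2\,\mathrm{Cov}\bigl(U,(i-U)^+\bigr)+2\,\mathrm{Cov}\bigl(U,(j-U)^+\bigr)+4\,\mathrm{Cov}\bigl((i-U)^+,(j-U)^+\bigr)$. The last term is nonnegative since both functions are nonincreasing in $U$, and $\bigl|\mathrm{Cov}\bigl(U,(j-U)^+\bigr)\bigr|\le \mathbb{E}\bigl[|U-\mathbb{E}U|\,(j-U)^+\bigr]\le \tfrac{n+1}{2}\cdot j\cdot\tfrac{j}{n}\le j^2$, so $\mathrm{Cov}_U\bigl(|i-U|,|j-U|\bigr)\ge \tfrac{n^2-1}{12}-2i^2-2j^2\ge \tfrac{n^2-1}{12}-4\cdot 200^2>0$ for $i<j\le 200$ and $n\ge 3000$. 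With that inserted, your argument is complete, and in my view tidier than the paper's.
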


Using Theorem \ref{theorem:prob-analysis} we compare the error values observed in the three studies with the baseline error values of the random algorithm. Specifically, we express the cumulative Whittle index errors of the three studies (shown in table \ref{table:mean-WI-errors}) in terms of $\mathbb{E}[\mathcal{E}^s]$ and  $\sigma(\mathcal{E}^s)$ of the purely random algorithm and quantify how much better is the RMAB system performing. This comparison is shown in table \ref{table:random-policy-comparison}.

\begin{table}[!htbp]
\centering
\caption{Comparing RMAB-based system with purely random policy based on error in top-k Whittle indices.}\label{table:random-policy-comparison}
\begin{tabular}{ | c | c | c | c |} 
 \hline
 Study & $\mathbb{E}[\mathcal{E}^s]$ & $\sigma(\mathcal{E}^s)$ & Cumulative WI error ($\mathcal{E}^s$) \\ 
 \hline \hline
 April 21 & 0.495 & 0.0204 & 0.436 $\leq \mathbb{E}[\mathcal{E}^s] - {\bf 2.892} \ \sigma[\mathcal{E}^s]$ \\ 
 Jan 22 & 0.497 & 0.0204 & 0.495 $\leq \mathbb{E}[\mathcal{E}^s] - {\bf 0.098} \ \sigma[\mathcal{E}^s]$  \\ 
 May 22 & 0.493 & 0.0204 & 0.486 $\leq \mathbb{E}[\mathcal{E}^s] - {\bf 0.343} \ \sigma[\mathcal{E}^s]$  \\
 \hline
\end{tabular}
\end{table}

In April 2021 study, the RMAB-based system performed significantly better than the purely random algorithm -- the error in April 2021 is about 3 standard deviations below the expected error of purely random algorithm. In contrast, we can see that the performance of RMAB-based system in the other two studies is just slight better the purely random algorithm. We can also see that May 2022 is slightly better as compared to January 2022 since in the former the Whittle index errors are 0.343 standard deviation lower than the expected errors of the random algorithm, whereas for January 2022 this number is 0.098. Notably, the relative performance of the RMAB system across studies obtained by the probabilistic analysis exactly matches with real-world performance of the RMAB system as shown in Figure \ref{fig:studies}.


\section{Proof of Theorem \ref{theorem:prob-analysis}}
\label{appendix:missing-proofs}

We begin by proving the following proposition, which will be used in the proof of Theorem \ref{theorem:prob-analysis}.
\begin{restatable}{proposition}{varianceBound}
\label{proposition:variance_bound}
For the purely random algorithm, the following two bounds hold\\ 
1) The variance of the Whittle index error of beneficiary $i$, $Var(\mathcal{E}^s_i) \leq 1/12$.\\
2) The covariance of the errors terms $Cov(\mathcal{E}^s_i, \mathcal{E}^s_j) \leq 0$ where beneficiary $i < j \leq 200$ and $n \geq 3000$.
\end{restatable}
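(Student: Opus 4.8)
The plan is to rewrite both error terms as functions of the observed ranks of the randomly-ordered beneficiaries, reducing each bound to an elementary computation over a uniform random permutation. Set $R_i := O(s_i)$, the observed rank of the beneficiary that the random algorithm places at predicted rank $i$, so that $\mathcal{E}^s_i = |i - R_i|/n$. Since $P$ is drawn uniformly at random, $(R_1,\ldots,R_n)$ is a uniformly random permutation of $[n]$; in particular each $R_i$ is uniform on $\{1,\ldots,n\}$, and for $i\neq j$ the pair $(R_i,R_j)$ is uniform over the ordered pairs of \emph{distinct} elements of $[n]$. Writing $f(r)=|i-r|$ and $g(r)=|j-r|$, we have $\mathcal{E}^s_i=f(R_i)/n$ and $\mathcal{E}^s_j=g(R_j)/n$, which is the representation I will work with throughout.

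For the variance bound (1), I would compute the first two moments of $f(R_i)$ directly: both $\mathbb{E}[f(R_i)]=\tfrac1n\sum_{r=1}^n|i-r|$ and $\mathbb{E}[f(R_i)^2]=\tfrac1n\sum_{r=1}^n(i-r)^2$ are closed-form polynomials in $i$ and $n$ via the standard power-sum formulas, and dividing by the appropriate powers of $n$ gives $Var(\mathcal{E}^s_i)$ exactly. The cleanest way to anticipate the answer is the continuous analogue: if $U$ is uniform on $[0,1]$ and $c=i/n$, then $Var(|c-U|)=\tfrac1{12}-(c^2-c)^2\le\tfrac1{12}$, with equality only at the endpoints. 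The discrete computation mirrors this, yielding $Var(\mathcal{E}^s_i)=\tfrac1{12}-(\text{nonnegative remainder})$; as a check on the extreme case, $i=1$ gives $f(R_1)=R_1-1$, an affine image of a discrete uniform, so $Var(\mathcal{E}^s_1)=\tfrac{n^2-1}{12n^2}<\tfrac1{12}$. Hence $Var(\mathcal{E}^s_i)\le 1/12$ for every $i$.

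For the covariance bound (2), I would first reduce the sign of the covariance to a Chebyshev-type sum inequality. With $S=\sum_r f(r)$, $T=\sum_r g(r)$ and $Q=\sum_r f(r)g(r)$, a short calculation using the joint law of $(R_i,R_j)$ gives
\[ Cov(\mathcal{E}^s_i,\mathcal{E}^s_j)=\frac{ST-nQ}{n^4(n-1)}, \]
so it suffices to prove $nQ\ge ST$, equivalently $\sum_{a,b}(f(a)-f(b))(g(a)-g(b))\ge 0$. The subtlety is that $f$ and $g$ are V-shaped with distinct minima at $i<j$, so they are \emph{not} similarly sorted and Chebyshev's sum inequality fails: pairs lying strictly between $i$ and $j$ contribute negatively. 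This is exactly where the regime $i<j\le 200$, $n\ge 3000$ enters. The key observation is that a pair $(a,b)$ contributes a negative term only when $f$ and $g$ disagree on the order of $a$ and $b$, and a short case analysis shows this forces $a+b\le 2j$, hence $a,b<2j\le 400<n$. Consequently the total negative contribution is bounded by a constant $C=O(j^4)$ independent of $n$, whereas restricting to pairs with $a,b\ge 2j$ (where $f,g$ are both increasing, so terms are $(a-b)^2$) already gives a positive contribution of order $\tfrac12\sum_{a,b\ge 2j}(a-b)^2=\Theta(n^4)$. For $n\ge 3000$ and $j\le 200$ this dominates $C$, so $nQ-ST>0$ and therefore $Cov(\mathcal{E}^s_i,\mathcal{E}^s_j)\le 0$.

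The main obstacle is the covariance bound. Unlike the variance, it cannot be read off from a generic inequality, because the non-monotone functions $f,g$ genuinely produce negative cross terms and Chebyshev's inequality does not apply. The crux is the localization step showing that every discordant pair is confined to $\{1,\ldots,2j\}$, which is precisely what allows the large-$n$, bounded-$k$ regime to rescue the Chebyshev-type estimate; verifying this localization (and the accompanying constant-versus-$\Theta(n^4)$ comparison) is where the real work lies.
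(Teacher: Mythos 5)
Your proposal is correct, and it reaches both bounds by a genuinely different route than the paper. For part (1), the paper computes $Var(\mathcal{E}^s_1)$ exactly (obtaining $\frac{(n-1)(2n-1)}{6n^2} - \frac{(n-1)^2}{4n^2} = \frac{n^2-1}{12n^2}$) and then handles general $i$ by conditioning on whether $O(s_i) \leq i$ or $O(s_i) > i$ and reusing the $i=1$ bound on each branch; your plan of computing both moments of $|i - R_i|$ directly for arbitrary $i$, guided by the continuous benchmark $Var(|c-U|) = \frac{1}{12} - (c^2-c)^2$ (which is correct), is more self-contained and avoids the paper's somewhat delicate step of bounding conditional second moments taken about the unconditional mean. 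For part (2), the paper also expands the covariance over the joint law of distinct pairs, but then argues purely by crude magnitude bounds: it upper-bounds $\sum_{k \neq l}|k-i|\,|l-j|$ by $\left(n(n-1)/2\right)^2$, lower-bounds $\sum_k |k-i|\,|k-j|$ by $\sum_{k=1}^{n-200}k^2$, and reduces the sign question to an explicit quartic polynomial in $n$ whose real roots ($1$ and roughly $2178.45$) give negativity for $n \geq 2179$, hence for $n \geq 3000$. You instead pass through the exact identity $Cov(\mathcal{E}^s_i,\mathcal{E}^s_j) = (ST - nQ)/(n^4(n-1))$ (which I verified) and the Chebyshev-type rearrangement $2(nQ - ST) = \sum_{a,b}\big(f(a)-f(b)\big)\big(g(a)-g(b)\big)$, and your localization claim checks out: the only discordant configuration for $a<b$ is $f(a)<f(b)$ with $g(a)>g(b)$, which forces either $b \leq j$ or $a < j < b$ with $a+b<2j$, so every discordant pair lies in $\{1,\ldots,2j\}$; the negative mass is then at most about $16j^4 \leq 2.6 \times 10^{10}$, while the concordant pairs with $a,b \geq 2j$ alone contribute $\frac{m^2(m^2-1)}{6}$ with $m = n-2j+1 \geq 2601$, i.e.\ order $10^{12}$. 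What your argument buys is an explanation of \emph{why} the covariance is negative in this regime---the disagreement between the two V-shaped functions is confined near the top of the ranking, so the bounded-$k$, large-$n$ regime forces the positive Chebyshev mass to dominate---and a better threshold (roughly $n \gtrsim 5j$ rather than $n \geq 2179$); what the paper's argument buys is mechanical brevity, since everything reduces to a single polynomial sign check.
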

\begin{proof}
1) We begin by considering the variance of error term $\mathcal{E}^s_1$.

\begin{align*}
    \label{equation:prop-proof-0}
    Var(\mathcal{E}^s_1) & = \mathbb{E}[\mathcal{E}^s_1 \mathcal{E}^s_1] - \mathbb{E}[\mathcal{E}^s_1] \cdot \mathbb{E}[\mathcal{E}^s_1] \\
    & = \sum_{i = 1}^n \frac{1}{n} \frac{(i-1)^2}{n^2} - \Big( \sum_{i = 1}^n \frac{1}{n} \frac{i-1}{n} \Big)^2 \\
    & = \frac{(n-1)(2n-1)}{6n^2} - \frac{(n-1)^2}{4n^2} \leq \frac{1}{12} \numberthis
\end{align*}
Using equation \ref{equation:prop-proof-0}, we can upper bound $var(\mathcal{E}^s_i$ for any $i$ as follows

\begin{align*}
    Var(\mathcal{E}^s_i) & = \mathbb{E}[(\mathcal{E}^s_i - \mathbb{E}[\mathcal{E}^s_i])^2] = \mathbb{E}\Big[ \Big(\frac{|j-i|}{n} - \mathbb{E}\Big[\frac{|j-i|}{n}\Big]\Big)^2\Big] \\
    \begin{split} & = \frac{i}{n} \cdot \mathbb{E}\Big[\Big(\frac{|j-i|}{n} - \mathbb{E}\Big[\frac{|j-i|}{n}\Big]\Big)^2\Big | j \leq i \Big] + \\ & \frac{n-i}{n} \cdot \mathbb{E}\Big[\Big(\frac{|j-i|}{n} - \mathbb{E}\Big[\frac{|j-i|}{n}\Big]\Big)^2\Big | j > i \Big] \end{split} \\
    & \leq \frac{i}{n} \cdot \frac{1}{12} + \frac{n-i}{n} \cdot \frac{1}{12} = \frac{1}{12} \tag{using equation \ref{equation:prop-proof-0}}
\end{align*}

\noindent
2) By definition, we can write $Cov(\mathcal{E}^s_i, \mathcal{E}^s_j)$ for beneficiaries $i < j \leq 200$ as

\begin{align*}
    \label{equation:prop-proof-1}
    & Cov(\mathcal{E}^s_i, \mathcal{E}^s_j) = \mathbb{E}[\mathcal{E}^s_i \mathcal{E}^s_j] - \mathbb{E}[\mathcal{E}^s_i] \cdot \mathbb{E}[\mathcal{E}^s_j] \\
    & = \sum_{k \neq l} \frac{|k - i| |l - j|}{n(n-1)} - \sum_k \frac{|k-i|}{n} \cdot \sum_l \frac{|l-j|}{n} \\
    \begin{split} & = \sum_{k \neq l} \Big( \frac{1}{n(n-1)} - \frac{1}{n^2} \Big) |k-i||l-j| \ - \\ & \sum_{k} \frac{1}{n^2} |k-i||k-j| \end{split} \\
    & = \frac{1}{n^2(n-1)} \sum_{k \neq l} |k-i||l-j| - \frac{1}{n^2} \sum_{k} |k-i||k-j| \numberthis \\
\end{align*}
We can upper bound the above expression by replacing $\sum_{k \neq l} |k-i||l-j|$ with $\sum_{k \neq l} |k-i||l-j| \leq \sum_{k \neq l} k \cdot l \leq (n(n-1)/2)^2$. Furthermore, using the fact that $i < j \leq 200$, we get that $\sum_{k} |k-i||k-j| \geq \sum_{k=1}^{n-200} k^2 = \frac{(n-200)(n-199)(2n-399)}{6}$. Combining these observations with equation \ref{equation:prop-proof-1}, we get

\begin{align*}
    \label{equation:prop-proof-2}
    \begin{split} & Cov(\mathcal{E}^s_i, \mathcal{E}^s_j) \leq  \frac{1}{n^2(n-1)} \frac{n^2(n-1)^2}{2^2} \ - \\ & \frac{1}{n^2} \frac{(n-200)(n-199)(2n-399)}{6} \end{split} \\
    \begin{split} & \leq \frac{1}{12n^2(n-1)} \times \\ & \Big( 3n^2(n-1)^2 - 2(n-1)(n-200)(n-199)(2n-399)\Big) \end{split} \numberthis
\end{align*}
Note that the polynomial $3n^2(n-1)^2 - 2(n-1)(n-200)(n-199)(2n-399) = -n^4 + \mathcal{O}(n^3)$. Furthermore, it can be verified that the real roots of the polynomial are $1$ and $2178.45$, therefore, if $n \geq 2179$, then the the value of equation \label{equation:prop-proof-2} is negative. Thus, we have 

\begin{align*}
    \label{equation:prop-proof-3}
    \begin{split}  Cov &(\mathcal{E}^s_i , \mathcal{E}^s_j)  \leq \frac{1}{12n^2(n-1)} \times \\ & \Big( 3n^2(n-1)^2 - 2(n-1)(n-200)(n-199)(2n-399)\Big) \end{split}\\
    & \leq 0 \tag{for $n \geq 3000$}
\end{align*}
This concludes the proof.
\end{proof}

Now we restate and prove Theorem \ref{theorem:prob-analysis}.

\probAnalysis*
\begin{proof}
First we will compute the expected error value and then we will show an upper bound on the variance of the error value for the purely random algorithm.\\

\noindent
\emph{Computing the expected error: } To facilitate mathematical analysis, we view the purely random algorithm as follows: at each decision step, the algorithm selects a permutations of the beneficiaries $P$ uniformly at random and selects the $k$ elements having the least rank for intervention; denote these $k$ beneficiaries by $(1, 2, \ldots, k)$ where beneficiary $i$ has rank $i$.

Using linearity of expectation and equation \ref{definition:Spearman-WI-error}, the expected value of Whittle index error can be written as

\begin{align}
\label{equation:proof-1}
    \mathbb{E}[\mathcal{E}^s] & = \frac{1}{k} \sum_{i=1}^k \mathbb{E}[\mathcal{E}^s_i]
\end{align}

To compute $\mathbb{E}[\mathcal{E}^s]$, we will first compute the value of $\mathbb{E}[\mathcal{E}^s_i]$. Denote by $O = (b_1, b_2, \ldots, b_n)$ the sequence of beneficiaries in descending order of Whittle indices for that week. We know that the purely random algorithm selects $k$ beneficiaries uniformly at random. Thus, we can interpret the algorithm as selecting beneficiary $i$ uniformly at random for each $1 \leq i \leq k$; indeed these $k$ selections are not independent. Using just this interpretation we can compute the value of $\mathbb{E}[\mathcal{E}^s_i]$. Specifically,

\begin{align*}
    \mathbb{E}[\mathcal{E}^s_i] & = \mathbb{E}\Big[\frac{|i - O(i)|}{n}\Big] \tag{via equation \ref{definition:Spearman-WI-error}}\\
    & = \sum_{j=1}^n \frac{1}{n} \cdot \frac{|i - j|}{n} = \sum_{j = 1}^i \frac{1}{n} \cdot \frac{i-j}{n} + \sum_{j=i+1}^n \frac{1}{n} \cdot \frac{j-i}{n}\\
    & = \frac{1}{n^2} \Big({i \choose 2} + {n-i+1 \choose 2}\Big) \numberthis
    \label{equation:proof-2}
\end{align*}
On combining equation \ref{equation:proof-1} with \ref{equation:proof-2} we get

\begin{align*}
    \mathbb{E}[\mathcal{E}^s] & = \frac{1}{k} \sum_{i=1}^k \mathbb{E}[\mathcal{E}^s_i] = \frac{1}{k} \sum_{i=1}^k \frac{1}{n^2} \Big({i \choose 2} + {n-i+1 \choose 2}\Big) \\
    & = \frac{1}{kn^2} \Big( \sum_{i=1}^k {i \choose 2} + \sum_{i=1}^k {n-i+1 \choose 2}\Big) \\
    & = \frac{1}{kn^2} \Big( {k+1 \choose 3} + {n+1 \choose 3} - {n - k + 1 \choose 3} \Big) \tag{both sums telescope} \\
    & = \frac{1}{2} - \frac{k}{2n} + \frac{k^2-1}{3n^2} \tag{simplifying}\\
\end{align*}

\emph{Upper bounding the standard deviation: } Consider the variance of error $\mathcal{E}^s$.
\begin{align*}
    Var(\mathcal{E}^s) & = Var \Big( \frac{1}{k} \sum_{i=1}^k \mathcal{E}^s_i \Big) = \frac{1}{k^2} Var \Big( \sum_{i=1}^k \mathcal{E}^s_i \Big) \tag{via equation \ref{definition:Spearman-WI-error}} \\
    & = \frac{1}{k^2} \Big( \sum_{i=1}^k Var(\mathcal{E}^s_i) + \sum_{i\neq j} Cov(\mathcal{E}^s_i, \mathcal{E}^s_j) \Big) \\
    & \leq \frac{1}{k^2} \Big( \sum_{i=1}^k \frac{1}{12} + 0 \Big)  \leq \frac{1}{12k} \tag{via Proposition \ref{proposition:variance_bound}}\\
\end{align*}

This concludes the proof since $Var(\mathcal{E}^s) \leq 1/12k$ and hence the standard deviation $\sigma(\mathcal{E}^s) \leq 1/2\sqrt{3k}$.
\end{proof}

\end{document}